\date{}
\newcommand{\alg}{\textsc{\small{KCRL}}\xspace}
\newtheorem{assumption}{Assumption}
\newtheorem{theorem}{Theorem}
\newtheorem{proposition}{Proposition}
\newtheorem{definition}{Definition}
\newtheorem{remark}{Remark}
\newcommand{\dist}{\mathrm{dist}}
\tikzset{
  startstop/.style={
    rectangle, 
    rounded corners,
    minimum width=3cm, 
    minimum height=1cm,
    align=center, 
    draw=black
    },
  process/.style={
    rectangle, 
    minimum width=3cm, 
    minimum height=1cm, 
    align=center, 
    draw=black
    },
  decision/.style={
    rectangle, 
    minimum width=3cm, 
    minimum height=1cm, align=center, 
    draw=black
    },
  arrow/.style={draw,thick,->,>=stealth},
  dec/.style={
    ellipse, 
    align=center, 
    draw=black
    },
}
\title{ KCRL: Krasovskii-Constrained Reinforcement Learning with Guaranteed Stability in Nonlinear Dynamical Systems}
\author{%
  Sahin Lale\thanks{California Institute of Technology, \texttt{\{alale,adamw,anima\}@caltech.edu}} , Yuanyuan Shi\thanks{University of California San Diego , \texttt{ yyshi@eng.ucsd.edu}} , Guannan Qu\thanks{Carnegie Mellon University, \texttt{gqu@andrew.cmu.edu}} , Kamyar Azizzadenesheli\thanks{Purdue University, \texttt{kamyar@purdue.edu}} , \\ Adam Wierman\footnotemark[1] , Anima Anandkumar\footnotemark[1]
 
}
\begin{document}

\maketitle

\begin{abstract}
 Learning a dynamical system requires stabilizing the unknown dynamics to avoid state blow-ups. However, current reinforcement learning (RL) methods lack stabilization guarantees, which limits their applicability for the control of safety-critical systems. We propose a model-based RL framework with formal stability guarantees, Krasovskii Constrained RL (\alg), that adopts Krasovskii's family of Lyapunov functions as a stability constraint. The proposed method learns the system dynamics up to a confidence interval using feature representation, e.g. Random Fourier Features. It then solves a constrained policy optimization problem with a stability constraint based on Krasovskii's method using a primal-dual approach to recover a stabilizing policy. We show that \alg is guaranteed to learn a stabilizing policy in a finite number of interactions with the underlying unknown system. We also derive the sample complexity upper bound for stabilization of unknown nonlinear dynamical systems via the KCRL framework.
\end{abstract}

\section{Introduction}
\label{sec:intro}
Reinforcement Learning (RL) has achieved impressive success in the past decade, highlighted by success stories in game play~\cite{silver2016mastering}. As a result, RL has been recognized as a promising alternative for  traditional decision-making and control tasks in engineering systems, e.g.  robotics~\cite{levine2018learning}, energy systems~\cite{chen2021reinforcement}, and transportation~\cite{wu2017flow}. However, despite the promise, major hurdles remain before deployment in such systems is feasible.

One of the key challenges is that many real-world systems are safety-critical and have high standards for stability. 
Even though RL algorithms outperform classical control methods in complex and uncertain dynamical environments, 
they generally do not provide formal stability guarantees outside of linear dynamical systems \cite{lale2022reinforcement,qu2021stable}. In particular, most popular RL algorithms for control of nonlinear systems follow model-free gradient-based policies that focus on minimizing the control cost and do not explicitly consider stability \cite{lillicrap2015continuous, haarnoja2018soft}. This lack of stability guarantees currently prevents the deployment of RL algorithms in real-world problems, where the dynamics are usually nonlinear and instabilities are costly, e.g., frequency instability in power systems \cite{cui2020reinforcement}.

In contrast, control-theoretic approaches provide a rich set of tools for analyzing the stability of dynamical systems and synthesizing stable control policies. There is a large body of work that focuses on designing stable and robust controllers for linear systems and beyond \cite{zhou1996robust,khalil2002nonlinear}. Tools like Lyapunov's direct method, contraction analysis \cite{lohmiller1998contraction} and  passivity theory \cite{slotine1991applied} provide ways to verify stability and synthesize stabilizing controllers for nonlinear dynamical systems. The key challenge in these methods is to find the right Lyapunov functions to verify stability, which in practice heavily relies on trial and error. To this end, Krasovskii's approach~\cite{feijer2009krasovskii} gives a principled way to explicitly construct Lyapunov functions by considering a quadratic form of the derivative of the state variables.

\begin{figure*}[t]
    \centering
    \begin{tikzpicture}[
  node distance=0.4cm,
  every edge/.style={arrow}
  ]
\node (in1) [process] {RFF representation \\ of past \\ state-action pairs};
\node (out1) [process,right=of in1] {Model Learning \\ in RFF Basis: \\ $\hat{F}(\cdot)$};
\node (out3) [process,right=of out1] {Stability-Constrained \\ Policy Learning \\ $\frac{\partial \hat{F}}{\partial x} M \frac{\partial \hat{F}}{\partial x}-M \prec -\epsilon I$};
\node (out2) [process,right=of out3] {Deploy \\ stable policy: \\ $u_t = \pi_{\theta_i}(x_t)$};
\path
  (in1) edge (out1) 
  (out1) edge  (out3)
  (out3) edge node[below] {$\theta_i$ } (out2)
  (out2) edge[bend left=90, looseness=0.25] node[left]  {$\substack{\text{\small End of} \\ \text{\small Epoch}}$ } (in1);
  
\path[->,every loop/.style={min distance=6mm,looseness=1}] (out2)
         edge  [in=-250,out=70,loop] node[above] {$\substack{\text{\small During Epoch} }$} (); 
\end{tikzpicture}
    \caption{\alg~Framework}
    \label{fig:algo_framework}
\end{figure*}

The common denominator of these control-theoretic methods is that they require a highly accurate model of the system and assume that such a model is known a priori. Recently, there has been a growing interest in incorporating modeling errors in deriving various stability guarantees \cite{tsukamoto2020neural,singh2020learning,boffi2020learning}. 
However, these works neither provide guarantees in the \emph{online} learning/control setting, nor 
achieve the desired modeling errors and stability performance in \emph{finite time}, which are the key requirements for deploying learning-based methods in real-world systems. 



\textbf{Contributions.} 
Motivated by the contrast above, we integrate control theoretic tools into RL and provide a finite-time learning to stabilize guarantee in online control of unknown nonlinear dynamical systems. In particular, we propose a model-based RL framework, Krasovskii Constrained RL (\alg), that adapts Krasovskii's construction of quadratic Lyapunov functions as a stability constraint in the policy optimization problem. This novel stability constraint is a linear matrix inequality type constraint on the Jacobian of the closed-loop system, which guarantees that the Lyapunov stability conditions are met by design (Theorem \ref{thm:true_system_stability}). \alg learns the unknown model dynamics in epochs via kernel-based feature representations, i.e. Random Fourier Features (RFF)~\cite{rahimi2007random}, and solves the stability-constrained policy optimization problem via a primal-dual approach using the learned model in the constraint (Figure \ref{fig:algo_framework}).

To formally guarantee the learning of stabilizing control policies in finite-time, we show that (i) the Jacobian of the underlying closed-loop system can be learned up to a confidence interval using RFF (Proposition \ref{thm:RFF_jacobian_finite}), (ii) the solution of the policy optimization problem with the stability constraint that uses a learned model still stabilizes the underlying system (Theorem \ref{thm:estimated_system_stability}), and (iii) the primal-dual approach guarantees the satisfaction of this stability constraint (Theorem \ref{thm:primal_dual_guarantee}) after convergence. 
Combining these results, we provide the number of samples required to learn a stabilizing policy for the underlying system in terms of system properties and the number of RFF (Theorem \ref{thm:sample_complexity}). 




\section{Related Work}
This paper connects to a broad set of literature in RL and control.

\textit{Model-based RL in dynamical systems.} There has been a flurry of studies that consider model-based RL in dynamical systems due to their superior sample efficiency, task generalization and interpretable guarantees~\cite{recht2019tour}. The main focus has been on learning the system dynamics and providing performance guarantees in finite-time for both linear \cite{simchowitz2020naive} (and references within),
and nonlinear systems~\cite{kakade2020information,lale2021model}. While deriving these performance guarantees, the formal finite-time stability guarantees are also derived for linear systems~\cite{faradonbeh2018finite}. However, these guarantees have only been \textit{assumed to hold} with a stabilizing oracle for nonlinear systems~\cite{kakade2020information,lale2021model}. Therefore, our work completes an important part of the picture in finite-time stabilization of dynamical systems. 

\textit{Control Theory.} Our work is related to the Krasovskii's method~\cite{feijer2009krasovskii} and contraction analysis~\cite{lohmiller1998contraction}, which can be viewed as a specific way of constructing a Lyapunov function. More broadly, there are a large body of tools in control theory that synthesize controllers that are stable, e.g., passivity theory and feedback linearization~\cite{slotine1991applied}.
In the context of these works, our contribution can be viewed as bridging one of these classical tools with policy learning in RL, where the system model is unknown.

\textit{Lyapunov-based Policy Learning.} Lyapunov theory is a systematic framework to analyze the stability of a control system. The core idea is to identify a Lyapunov function of the state, with negative derivative along system trajectories. 
Using Lyapunov functions in RL was first introduced by~\cite{perkins2002lyapunov}, but the work did not discuss how to find a candidate Lyapunov function in general. 
A set of recent works including~\cite{chow2018lyapunov,richards2018lyapunov,chang2019neural,jin2020neural} have attempted to address this challenge by jointly learning the policy and the Lyapunov function.
Our goals and approaches are different. We use a principled framework to \textit{construct} (rather than learn) the Lyapunov function, and design the policy learning such that the stability conditions are met. In this sense, \cite{shi2021stability} is most related to our work. It considers a particular power system model and proposes model-free monotonic policies to stabilize the system. However, their model assumption is specific to power systems, whereas our work considers general nonlinear systems. 

\textit{Safe RL.} The goal of safe RL is to learn a policy that avoids entering unsafe regions, rather than requiring a policy to have attractive or bounded behavior. Various methods have been proposed to ensure safety in RL, 
such as using control barrier functions~\cite{taylor2020learning}, Hamilton-Jacobi safety analysis~\cite{fisac2019bridging}, and MPC as a safety filter~\cite{hewing2020learning}. We note that safety does not imply stability and vice visa, and our main focus in this paper is to learn an RL policy with the stability guarantee.

\section{Preliminaries}
\label{sec:preliminary}
In this section, we first describe the control problem and formally define the stability criteria. Then, we discuss the model dynamics and give a brief overview of RFF with their approximation guarantees.

\subsection{Control Problem}
Consider a discrete-time nonlinear system given as 
\begin{equation} \label{eq:system}
    x_{t+1} = f(x_t, u_t),
\end{equation}
where $x_t \in \mathbb{R}^n$ is the state of the system, $u_t \in \mathbb{R}^p$ is the control input at time-step $t$. We study the discrete optimal control setting for the system given in \eqref{eq:system}. Suppose there is a class of controllers $g_\theta(\cdot)$, parameterized by $\theta \in \Theta$. The goal is to design a controller $g_\theta(\cdot)$ that minimizes a control cost, 
\begin{subequations}\label{eq:policy_opt}
\begin{align}
    \min_{\theta} J(\theta) = &\sum\nolimits_{t=0}^\infty \gamma^t c(x_t,u_t), \\
    \text{ s.t. } x_{t+1} &= f(x_t, u_t), \label{eq:dyn}\\
    u_t &= g_\theta(x_t), \label{eq:feedback_ctrl}
\end{align}
\end{subequations}
where $c(x,u)$ is the per stage cost and $\gamma$ is the discounting factor. 

Note that there are many ways to solve or approximate the policy minimization problem \eqref{eq:policy_opt}. Generally speaking, the procedure is to run gradient methods on the policy parameter $\theta$ with step size $\eta$,
$ \theta \leftarrow \theta - \eta \nabla J(\theta). $
To approximate the gradient $\nabla J(\theta)$, one can use sampled trajectories such as REINFORCE or value function approximation such as actor-critic methods. As we are dealing with deterministic policies, one of the most popular choices is the Deep Deterministic Policy Gradient (DDPG)~\cite{lillicrap2015continuous}, where the policy gradient is approximated by
\[ \nabla J(\theta) \approx \frac{1}{N} \sum_{i\in B} \nabla_u \hat{Q}(x, u)|_{x = x_i, u=g_{\theta}(x_i)} \nabla_{\theta} g_{\theta}(x)|_{x_i}.\]
Here $\hat{Q}(x, u)$ is the value network (the critic) that can be learned via temporal difference learning, $g_{\theta}(x)$ is the actor network, and $\{x_i\}_{i\in B}$ are a batch of samples with batch size $|B| = N$ sampled from the replay buffer which stores historical state-action pairs. 

\subsection{Stability} 
In control systems, stability studies whether the state trajectory of the closed-loop system $x_{t+1} \!=\! f(x_t,g_\theta(x_t))$ asymptotically converges to the desired stationary point. 
However, in general RL formulations such as \eqref{eq:policy_opt}, stability is not an explicit requirement. It is mostly treated as an implicit regularization, since instability usually leads to high (or infinite) costs. 
The lack of an explicit stability requirement can lead to several issues. During the training phase, the policy may become unstable, causing the training process to terminate. Even after a policy is trained, there is no formal guarantee that the closed loop system is stable, which hinders the learned policy's deployment in real-world engineering systems where there is a very strict requirement of stability. 
In order to explicitly constrain stability in policy learning, we constrain the search space of policy in the set of stabilizing controllers from Lyapunov stability theory.

\begin{definition} (Asymptotically stable equilibrium)
A dynamical system $x_{t+1} = f(x_t,g_\theta(x_t))$ is asymptotically stable around $x^{(e)}$ if $f(x^{(e)},g_\theta(x^{(e)}))=x^{(e)}$, and further, there exists a region around $x^{(e)}$, $B_\delta(x^{(e)}) = \{x: \Vert x - x^{(e)}\Vert\leq \delta\} $ such that $\forall x_0\in B_\delta(x^{(e)})$, we have $\lim_{t\rightarrow \infty} \Vert x_t - x^{(e)}\Vert = 0 $. 
\end{definition}
More generally, the following definition considers a set of equilibrium points, where we have used notation $\dist(x, S ):= \inf_{y\in S} \Vert y - x\Vert $ to denote distance between point $x$ and set $S$. 
\begin{definition}(Asymptotically stable set)
A dynamical system $x_{t+1} = f(x_t,g_\theta(x_t))$ is asymptotically stable around set $S_e$ if $f(x^{(e)},g_\theta(x^{(e)}))=x^{(e)}, \forall x^{(e)}\in S_e$, and further, there exists $B_\delta(S_e) :=\{x:\dist(x,S_e)\leq \delta\}$ such that  $\forall x_0\in B_\delta(S_e)$, we have $\lim_{t\rightarrow \infty} \dist(x_t, S_e )  = 0 $.  
\end{definition}

A common approach to proving stability of a dynamical system is via Lyapunov's direct method, which involves defining a positive definite function that decreases along the system trajectory. For a more complete overview of Lyapunov stability theory, please refer to~\cite[Chapter 3]{khalil2002nonlinear}.

\subsection{System Dynamics} 
In this work, we study problem \eqref{eq:policy_opt} under unknown system dynamics. Note that for $\phi_t \!=\! [x_t^\top, u_t^\top]^\top$, one can write the system dynamics given in \eqref{eq:system} as \begin{equation} \label{eq:sys_vectorfield}
    x_{t+1} = F(\phi_t),
\end{equation}
for some nonlinear function $F$. For this given system, we assume that the unknown nonlinear function $F$ lives within Reproducing Kernel Hilbert Spaces (RKHS) of infinitely smooth functions defined by a known positive definite continuous kernel $\kappa(\cdot, \cdot)$, e.g. Gaussian kernel. In particular, we assume that each mapping of $\phi_t$ to the elements of state vector $x_{t+1}$, \textit{i.e.} $(x_{t+1})_i = F_i(\phi_t)$ for $i=1,\ldots, n$, lives in this known RKHS.

Note that Gaussian kernels are universal kernels such that they can approximate an arbitrary continuous target function uniformly on any compact subset of the input space using possibly infinite kernel evaluations~\cite{micchelli2006universal}. Therefore, the class of nonlinear dynamics considered in this work, which can be infinite sum of kernel evaluations, constitutes a vast variety of nonlinear systems. In particular, they are more general than the kernelized nonlinear systems, where the system dynamics are assumed to be represented by a finite sum of kernel evaluations~\cite{kakade2020information,mania2020active}.

\begin{remark}
We consider fully observable (state-feedback) nonlinear systems in this paper. However, the results of this work can be extended to partially observable nonlinear dynamical systems. More specifically, the results can be rewritten for an order-$h$ nonlinear autoregressive exogenous system as depicted in \cite{lale2021model}, \textit{i.e.}, considering the last $h$ input-output pairs as the state $x_t$. 
\end{remark}


\subsection{RFF and Approximation Guarantees}

Kernel methods are foundational tools used in modeling complicated functional relationships in many machine learning problems. The so-called ``kernel trick'' is the main component of these methods. The kernel trick states that for some positive definite kernel $\kappa(\cdot, \cdot)$, the kernel evaluation of data points $x_1$ and $x_2$ are equivalent to inner product between possibly infinite dimensional feature representations $\psi(\cdot)$ of $x_1$ and $x_2$ in a Hilbert space $\mathcal{H}$: $\kappa(x_1,x_2) \!=\! \langle \psi(x_1), \psi(x_2) \rangle_{\mathcal{H}} $.

Assume that we have an underlying functional $h$ such that $y_i \!=\! h(x_i)$ and some collected data pairs $\mathcal{D}\!=\! (x_i,y_i)_{i=0}^K$ for $x \!\in\! \mathbb{R}^{d_x}$ and $y \!\in\! \mathbb{R}$. The kernel methods construct nonlinear models as $\hat{h}(\cdot) \!=\! \sum_{i=1}^K \alpha_i \kappa(x_i, \cdot)$, such that $\alpha_i$ are chosen to best represent $\mathcal{D}$ for some kernel $\kappa(\cdot,\cdot)$. However, for large number of data points, solving for $\alpha$ is computationally expensive. For this very reason, \cite{rahimi2007random} proposes to approximate $\psi(\cdot)$ with finite $D$-dimensional features $z(\cdot)$: 
\begin{equation} \label{eq:RFF_approx}
    \hat{h}(\cdot) = \sum\nolimits_{i=1}^K \alpha_i\langle \psi(x_i), \psi(\cdot) \rangle_{\mathcal{H}} \approx  \sum\nolimits_{i=1}^K \alpha_i z(x_i)^\top z(\cdot).
\end{equation}
This feature representation $z(x)$ of $x$ is termed as Random Fourier Features (RFF) and generated as 
\begin{equation} \label{eq:RFFgeneration}
z(x) \coloneqq \sqrt{\frac{2}{D}}\left[\cos \left(\omega_{1}^{\top} x+b_{1}\right),\ldots,\cos \left(\omega_{D}^{\top} x+b_{D}\right) \right]^\top
\end{equation}
where $\omega_{i}$ are drawn iid from the normalized Fourier transform of the kernel $\kappa$, which corresponds to a proper distribution $p(\omega)$, and $b_i$ are drawn iid from uniform distribution on $[0,2 \pi]$. Furthermore, \cite{rahimi2007random} shows that this method provides an unbiased estimate of $\kappa$ and the approximation error decays exponentially in $D$ (Claim 1 of \cite{rahimi2007random}), which motivates the use of RFFs in function approximation in practice~\cite{kakade2020information}.

Notice that so far $h(\cdot)$ considered in this section is a scalar-valued function. Recently, \cite{lale2021model} derived approximation theoretic guarantees for 
using RFF to approximate vector-valued nonlinear functions that live in a known RKHS within a bounded region, such as $F(\cdot)$ in \eqref{eq:sys_vectorfield}. In particular, \cite{lale2021model} shows that for large enough number of RFF ($D$), the best $D$-dimensional RFF approximation of $F$, $\bar{F}(\cdot) = W_*^\top z(\cdot) $, attains the approximation guarantee of
\begin{equation} \label{eq:bestRFFapprox}
\sup_{\|\phi\| \leq \Gamma_\phi} \|\bar{F}(\phi) - F(\phi) \| \leq \tilde{\mathcal{O}}(1/\sqrt{D})    
\end{equation}
with high probability, where $\Gamma$ describes the bounded region. Here $\tilde{\mathcal{O}}(\cdot)$ denotes the order up to logarithmic factors of $D$ and hides the dependencies on $n$ and $\Gamma_\phi$. 

This result is key to our analysis as we use it to derive the finite-time learning and stabilization guarantees of \alg in Section \ref{sec:theory_results}. In particular, we use this result to derive a novel finite-time approximation guarantee for the Jacobian of the underlying system \eqref{eq:sys_vectorfield} using RFF, which is then used to guarantee the design of a finite-sample stabilizing controller via \alg.



\section{KCRL Framework}
\label{sec:framework}
In this section, we present \alg. 
The outline of the algorithm is given in Algorithm \ref{algo:FLS}. \alg works in epochs of length $\tau$, where the controller is during the epoch is fixed. Each epoch consists of two parts:
\begin{enumerate}
    \item \emph{Model Learning:} We generate trajectories from the underlying system using the current controller, and use these trajectories to update the model estimate.
    \item \emph{Stable Policy Design:} We use a primal dual policy gradient approach to solve a stability constrained policy optimization problem and design the new controller for the next epoch.
\end{enumerate}

\begin{algorithm}[t] 
\caption{\alg }
  \begin{algorithmic}[1]
 \STATE \textbf{Input:} $\tau$, $g_{\theta_{0}}$, $D$, $\lambda$, $\epsilon_i$, $M$, $\mu$, $\eta_1$, $\eta_2$, $\epsilon_{pd}$  \\

\FOR{$i = 0, \ldots $}
    \FOR{$t = i \tau, \ldots, (i+1)\tau$}
    
        \STATE Execute $u_t = g_{\theta_{i}} (x_t)$ 
        \STATE Form $\phi_t = [x_t^\top, u_t^\top]^\top$ and store $z(\phi_{t})$ 
    \ENDFOR
    \STATE Solve \eqref{eq:least-squares} for $\hat{W}_i$ \& Form $\hat{F}_i(\cdot)=\hat{W}_i^\top z(\cdot) $ \hfill $\rhd$  \textbf{Model Learning} \\
    \STATE Solve \eqref{eq:policy_opt_stab} for $\theta_{i+1}$ using $\hat{F}_i(\cdot)$ via \eqref{eq:primal_dual}  \\
    \STATE Construct $g_{\theta_{i+1}}$ \hfill $\rhd$  \textbf{Stable Policy Design}
    
\ENDFOR
  \end{algorithmic}
 \label{algo:FLS} 
\end{algorithm}

\subsection{Model Learning}
Each epoch $i$ of \alg starts with a data collection from the underlying system for $\tau$ time-steps using the current controller, $g_{\theta_{i}}(\cdot)$. In each time step, \alg takes the action $u_t = g_{\theta_{i}} (x_t)$, it 
computes and stores the $D$-dimensional RFF representation of the current state-action pair $\phi_t = [x_t^\top, u_t^\top]^\top$ as $z(\phi_t)$ using \eqref{eq:RFFgeneration}. Note that $D$ and $\tau$ are user-defined parameters and the policy in the initial epoch is $g_{\theta_{0}}(\cdot)$.

Using the best $D$-dimensional RFF approximation of $F$ defined in \eqref{eq:bestRFFapprox}, we can approximate \eqref{eq:sys_vectorfield} as $x_{t+1} \approx W_*^\top z(\phi_t)$, for some unknown $W_* \in \mathbb{R}^{D \times n}$. For model learning, \alg considers this approximate model and tries to recover the best estimate for $W_*$ using all the data gathered. In particular, after the data collection of epoch $i$, \alg solves the following regularized least squares problem:
\begin{equation} \label{eq:least-squares}
    \min_{W} \lambda \| W \|_F^2 + \sum\nolimits_{s=0}^{t=(i+1)\tau} \|x_{s+1} - W^\top z(\phi_s)\|^2_2,
\end{equation}
for some $\lambda>0$ to obtain an estimate of $W_*$. Note that $\hat{W}_i = (Z_t Z_t^\top + \lambda I)^{-1} Z_t X_t^\top$ gives the closed-form solution of \eqref{eq:least-squares} for $X_t \!=\! [x_{t+1}, \ldots, x_{1}] \!\in\! \mathbb{R}^{n \times {(t+1)}}$, $Z_t \!=\! [z(\phi_t), \ldots, z(\phi_{0}) ] \!\in\! \mathbb{R}^{D \times {(t+1)}}$. Thus, at epoch $i$, the learned model by \alg is given by $\hat{F}_i(\cdot) = \hat{W}_i^\top z(\cdot)$. Note that instead of solving \eqref{eq:least-squares} from scratch in every epoch, \alg can also use online updates for constructing $\hat{W}_i$.

\subsection{Stable Policy Design}
Once \alg has a model estimate after the data collection, it aims to recover a stabilizing policy for the underlying system. To this end, it solves the following stability-constrained policy optimization problem at epoch $i$ using the estimated model $\hat{F}_i(\cdot)$:
\begin{subequations}\label{eq:policy_opt_stab}
    \begin{align}
    \min_{\theta} J(\theta) = &\sum\nolimits_{t=0}^\infty \gamma^t c(x_t,u_t), \\
    \text{ s.t. } x_{t+1} &= F(\phi_t), \label{eq:sysdyn}\\
    u_t &= g_\theta(x_t), \\
     \hat{G}_i(x,\theta)^\top M \hat{G}_i(x,\theta) - M  &\prec -\epsilon_i I, \forall x\in\mathbb{R}^n ,  \label{eq:stability_LMI_Kra}
\end{align}
\end{subequations}
where $\hat{G}_i(x,\theta) = \widehat{ \frac{\partial F(\phi)}{\partial x}} + \widehat{\frac{\partial F(\phi)}{\partial u}} \frac{\partial u}{\partial x}$ for the Jacobian estimates $\widehat{ \frac{\partial F(\phi)}{\partial x}}$ and $\widehat{\frac{\partial F(\phi)}{\partial u}}$ that are computed via finite difference method on the estimated model $\hat{F}_i(\cdot)$, $M  \succ  0$ is an appropriately chosen positive definite matrix and $\epsilon_i > 0$ is chosen with respect to the model learning error at epoch $i$. 

Compared to \eqref{eq:policy_opt}, the formulation \eqref{eq:policy_opt_stab} incorporates an additional constraint \eqref{eq:stability_LMI_Kra}. This constraint adapts Krasovskii's method for Lyapunov function construction and helps \alg to enforce stability of the learned policy. In Section \ref{sec:theory_results}, we provide a formal treatment of this constraint. In particular, we will first show that solving \eqref{eq:policy_opt_stab} using the true system $F(\cdot)$ in \eqref{eq:stability_LMI_Kra} gives a stabilizing policy. We further show that solving \eqref{eq:policy_opt_stab} using a well-refined estimate $\hat{F}_i$ and an appropriate choice of $\epsilon_i$ guarantees the recovery of stabilizing policy for the underlying system $F(\cdot)$. To solve the constrained optimization problem \eqref{eq:policy_opt_stab}, \alg uses a primal-dual technique. 

\textbf{Primal-Dual Approach.} To describe the approach, we use the following short-hand notation, $K(x,\theta)\!=\! \hat{G}(x,\theta)^\top M \hat{G}(x,\theta) \!-\! M \!+\! \epsilon_i I.$
With this notation, \eqref{eq:policy_opt_stab} can be reformulated as
\[ \min_{\theta} J(\theta) \text{ s.t. } \sup_x \lambda_{\max}( K(x,\theta)) < 0,\]
where $\lambda_{\max}(\cdot)$ is the largest eigenvalue. The Lagrangian for the problem is given as,
\[L(\theta,\mu) = J(\theta) + \mu \sup_x \lambda_{\max}( K(x,\theta)). \]
The primal-dual algorithm then proceeds as follows \cite{nedic2009subgradient},
\begin{align*}
    \theta&\leftarrow \theta - \eta_1 \Big[\nabla J(\theta)+ \mu \nabla_\theta  \sup_{x}\lambda_{\max}(  K(x,\theta)) \Big], \\
    
\mu &\leftarrow \max(0, \mu +   \eta_2 \sup_{x}\lambda_{\max}(  K(x,\theta) )).
\end{align*}
Since it is not possible to evaluate the $\sup_x$, we replace the sup over the state space with a sup over a batch of  representative points in the state space $\{x_i\}_{i\in \mathcal{B}}$. For the term $\nabla J(\theta)$, \alg uses standard policy gradient estimators, e.g. DDPG~\cite{lillicrap2015continuous}, to evaluate the policy gradient and denote the estimated gradient as $\widehat{\nabla J(\theta)}$. Thus, the primal-dual algorithm of \alg is given as,
\begin{align}
    \theta&\leftarrow \theta - \eta_1 \Big[ 
    \widehat{\nabla J(\theta)}+ \mu \nabla_\theta  \sup_{i\in \mathcal{B}}\lambda_{\max}(  K(x_i,\theta)) \Big], \nonumber \\
\mu&\leftarrow \max(0, \mu +   \eta_2 \sup_{i\in \mathcal{B}}\lambda_{\max}(  K(x_i,\theta) + \epsilon_{\mathrm{pd}}  I )), \label{eq:primal_dual}
\end{align}
where $\eta_1, \eta_2 >0$ are the step sizes and $\mathcal{B}$ is a batch of representative points in the state space, and $\epsilon_{\mathrm{pd}}>0$ is another constant that is chosen to tolerate the possible representation incapability of $\mathcal{B}$. In Theorem \ref{eq:primal_dual}, we give the characterization of $\epsilon_{\mathrm{pd}}$ to provide the theoretical guarantees of \alg.

\section{Main Result: Stability Guarantee of KCRL}
\label{sec:theory_results}
After presenting the algorithmic details of \alg in the previous section, we now provide the theoretical guarantees of \alg.
First, we have the following learning and stabilizability assumptions for the underlying system $F(\cdot)$.

\begin{assumption} [Exploratory and Bounded Initial Policy] \label{asm:learning}
The initial controller $g_{\theta_0}$ provides persistently exciting (PE) and bounded inputs that can be used for exploration and excite the system uniformly. In other words, the smallest eigenvalue of the design (sample covariance) matrix $Z_tZ_t^\top$ scales linearly over time. Moreover, for the closed-loop system generated by using the controller $u_t = g_{\theta_0}(x_t)$, i.e. $F(\phi_t)$ with $\phi_t = [x_t^\top, g_{\theta_0}(x)^\top]^\top $, we have $\|\phi_t\|\leq \Gamma_{\phi}$, for some finite $\Gamma_{\phi}$. 
\end{assumption}

\begin{assumption}[Krasovskii's Lyapunov Function for $F$]\label{asm:stable}
Let $G(x,\theta)$ denote the true Jacobian of the closed-loop system with respect to state $x$, i.e., $G(x,\theta) = \frac{\partial F(\phi)}{\partial x} + \frac{\partial F(\phi)}{\partial u} \frac{\partial u}{\partial x}$. We assume that there exists an $(M, \theta)$ pair such that 
$G(x,\theta)^\top M G(x,\theta) - M  \prec -\bar{\epsilon} I$, for some $\bar{\epsilon}>0$.
\end{assumption}

\begin{assumption}[Regularity Conditions]\label{asm:regularity} (i) Both $F$ and $g_\theta$ are continuously differentiable. 
(ii) $F$ is $L_F$-Lipschitz, or in other words, we have Jacobian of $F$, $\Vert J_F\Vert \leq L_F$. (iii) $\|\nabla^2 F_i\| \leq \mathbf{F_H}$ for all $i$, where $F_i$ denotes the mapping from $\phi_t$ to $i$th element of state vector $x_{t+1}$ for all $t$, \textit{i.e.}, $(x_{t+1})_i = F_i(\phi_t)$ for $i=1,\ldots, n$.  (iv) Any policy in the considered policy class is always $L_u$-Lipschitz, that is, $\Vert \frac{\partial g_\theta(x) }{\partial x}\Vert \leq L_u$, $\forall \theta$. 
\end{assumption}

The first assumption in Assumption \ref{asm:learning} is fairly standard and it guarantees the consistent and reliable estimation of the underlying system, whereas the second assumption can be interpreted as bounded-input to bounded-state condition on the initial controller. Note that in many nonlinear control systems this condition is already satisfied due to physical laws, i.e. the state of the system cannot go unboundedly. Assumption \ref{asm:stable} guarantees that the underlying system can be made stable with respect to Krasovskii's Lyapunov function with a stability margin, $\bar{\epsilon}$. While this assumption applies in many systems, including network congestion control~\cite{feijer2010stability} and power system control~\cite{shi2021stability}, it is an important future direction to relax this assumption and consider how to incorporate other Lyapunov function construction techniques.  
Finally, Assumption \ref{asm:regularity} provides standard bounds on the system properties in order to provide the analysis. 

With these assumptions in hand, we are ready to present our first result that the solution of the novel stability-constrained policy optimization problem \eqref{eq:policy_opt_stab} with the perfect knowledge of the underlying system, particularly the true Jacobian $J_F$, is a stabilizing policy by design. 

\begin{theorem}[Stability of the True Discrete-time System] \label{thm:true_system_stability}
 Suppose Assumptions \ref{asm:learning}-\ref{asm:regularity} hold. Consider solving \eqref{eq:policy_opt_stab} with the knowledge of true model $F(\cdot)$, such that \eqref{eq:stability_LMI_Kra} is evaluated using the true Jacobian $J_F$. If \eqref{eq:stability_LMI_Kra} holds for $\epsilon_i = 0$, we have the trajectory of $x_{t+1} = f(x_t,g_\theta(x_t))$ is asymptotically stable around set $S_e = \{x:f(x,g(x)) = x\}$. 
\end{theorem}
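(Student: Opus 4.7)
The plan is to construct a Krasovskii-type Lyapunov function for the closed-loop map $h(x):=f(x,g_\theta(x))$ and verify that it strictly decreases along every non-equilibrium trajectory. The natural candidate is
\[ V(x) \;:=\; (h(x)-x)^\top M\,(h(x)-x), \]
which, since $M \succ 0$ and $h$ is continuously differentiable by Assumption~\ref{asm:regularity}, is continuous, non-negative, and vanishes exactly on the equilibrium set $S_e$. The bulk of the argument is to show $V(h(x)) < V(x)$ whenever $x \notin S_e$; once that is in place, the asymptotic stability claim follows from a standard set-valued Krasovskii/LaSalle argument.

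To establish the strict descent, I would exploit that $G(x,\theta)$ in \eqref{eq:stability_LMI_Kra} is exactly the Jacobian of $h$. Writing $v := h(x) - x$ and applying the fundamental theorem of calculus along the segment from $x$ to $h(x)$ yields
\[ h(h(x)) - h(x) \;=\; \int_0^1 G(x+sv,\theta)\,v\,ds \;=\; \bar{G}(x)\,v, \qquad \bar{G}(x) := \int_0^1 G(x+sv,\theta)\,ds, \]
so that $V(h(x)) = v^\top \bar{G}(x)^\top M\,\bar{G}(x)\,v = \|\bar{G}(x) v\|_M^2$ in the $M$-weighted norm $\|u\|_M:=\sqrt{u^\top M u}$. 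The triangle inequality for the Bochner integral gives $\|\bar{G}(x) v\|_M \le \int_0^1 \|G(x+sv,\theta)\,v\|_M\,ds$. The hypothesis that \eqref{eq:stability_LMI_Kra} holds with $\epsilon_i = 0$ is equivalent to $\|G(y,\theta)\,u\|_M < \|u\|_M$ for every $y$ and every $u \neq 0$; by continuity of $G$ on the compact segment $\{x+sv : s \in [0,1]\}$, this pointwise strict inequality upgrades to a strict inequality for the integral, so $\|\bar{G}(x) v\|_M < \|v\|_M$ and therefore $V(h(x)) < V(x)$ whenever $v \ne 0$.

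Finally, I would localize to a compact neighborhood of a point $x^{(e)} \in S_e$: using Assumption~\ref{asm:regularity} to control Lipschitz constants of $h$ and Assumption~\ref{asm:learning} for boundedness, I would pick $\delta$ small enough so that a sublevel set of $V$ inside $B_\delta(x^{(e)})$ is forward-invariant. On that compact set $V$ is a continuous non-negative function whose zero set is exactly $S_e$, and $V(x_t)$ is monotonically non-increasing with strict decrease off $S_e$; a discrete-time set-valued LaSalle invariance argument then forces $x_t$ to approach the largest invariant subset of $\{V=0\}$, which is $S_e$ itself, giving $\dist(x_t, S_e) \to 0$. The main obstacle is the gap between the \emph{pointwise} strictness in \eqref{eq:stability_LMI_Kra} at $\epsilon_i = 0$ and the \emph{uniform} descent one would like: the continuity/compactness argument supplies uniformity along any bounded orbit, but only local asymptotic stability around $S_e$ can be concluded, and one must carefully verify forward invariance of the localizing sublevel set to prevent trajectories from escaping before the LaSalle conclusion applies.
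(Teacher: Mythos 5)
Your proposal is correct and follows essentially the same route as the paper: the same Krasovskii Lyapunov function $V(x)=(x-h(x))^\top M(x-h(x))$, the same key step of propagating the displacement $v=h(x)-x$ through the Jacobian averaged along the segment from $x$ to $h(x)$, and the same convexity argument to reduce to the pointwise constraint before invoking LaSalle. The only differences are cosmetic --- you keep the integral mean-value form and use the triangle inequality in the $M$-weighted norm, whereas the paper converts the integral to a finite convex combination via Kowalewski's mean value theorem and applies Jensen's inequality.
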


The proof is in Appendix of the extended version online. The key idea that underpins this result is to construct Krasovskii's Lyapunov function for the system as $V(x) \!=\! (x\!-\!f(x,g_{\theta}(x)))^\top M (x\!-\!f(x,g_{\theta}(x)))$ and using Kowalewki's mean value theorem~\cite{kowalewski1895mittelwertsatz} to show that difference equation along the system trajectory is negative definite, which leads to our stability constraint in \eqref{eq:stability_LMI_Kra}. Note that Theorem \ref{thm:true_system_stability} uses the exact Jacobians rather than estimates obtained via finite difference method. The following extends this result to tolerate modeling errors, in particular errors in the Jacobian estimates. 

\begin{theorem}[Stability under Modeling Error] \label{thm:estimated_system_stability}
 Suppose Assumptions \ref{asm:learning}-\ref{asm:regularity} hold and the Jacobian estimation errors satisfy 
 $\sup_{x} \max(\Vert \widehat{ \frac{\partial F_i(\phi)}{\partial x}}  - \frac{\partial F_i(\phi)}{\partial x} \Vert, \Vert \widehat{\frac{\partial F_i(\phi)}{\partial u}}  - \frac{\partial F_i(\phi)}{\partial u} \Vert) \leq \varepsilon_J < 1$, for all $i=1,\ldots,n$.
 Then, when \eqref{eq:stability_LMI_Kra} holds with $\epsilon_i = 2\bar{G} \Vert M\Vert (1 + L_u) \varepsilon_J + \Vert M\Vert  (1 + L_u)^2 \varepsilon_J^2  $, where $\bar{G} = (1+L_u) (L_F + \varepsilon_J)$, we have the trajectory of $x_{t+1} = f(x_t,g_\theta(x_t))$ is asymptotically stable around set $S_e = \{x:f(x,g(x)) = x\}$. 
\end{theorem}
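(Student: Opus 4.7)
The plan is to reduce Theorem~\ref{thm:estimated_system_stability} to Theorem~\ref{thm:true_system_stability} by showing that the stability constraint~\eqref{eq:stability_LMI_Kra} on the \emph{estimated} Jacobian with margin $\epsilon_i$ already implies the corresponding constraint on the \emph{true} Jacobian with margin $0$. Since Theorem~\ref{thm:true_system_stability} already guarantees asymptotic stability around $S_e$ whenever $G(x,\theta)^\top M G(x,\theta) - M \prec 0$ holds pointwise, once this implication is established the conclusion follows immediately. So the entire argument is a perturbation analysis of a quadratic matrix form.

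First I would unpack the Jacobian error. Writing $G(x,\theta)$ and $\hat{G}_i(x,\theta)$ in their composed form and using $\Vert \partial u/\partial x\Vert \leq L_u$ from Assumption~\ref{asm:regularity}(iv), the componentwise bound $\varepsilon_J$ on $\widehat{\partial F/\partial x}$ and $\widehat{\partial F/\partial u}$ yields
\[
E(x,\theta) := \hat{G}_i(x,\theta) - G(x,\theta), \qquad \Vert E(x,\theta)\Vert \leq (1+L_u)\,\varepsilon_J.
\]
Combined with Assumption~\ref{asm:regularity}(ii) (so $\Vert \partial F/\partial x\Vert, \Vert \partial F/\partial u\Vert \leq L_F$), one gets the operator bound $\Vert \hat{G}_i(x,\theta)\Vert \leq (1+L_u)(L_F+\varepsilon_J) = \bar{G}$, which is exactly the quantity appearing in the statement.

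Next I would expand $G^\top M G$ via $G = \hat{G}_i - E$:
\[
G^\top M G - M \;=\; \bigl(\hat{G}_i^\top M \hat{G}_i - M\bigr) \;-\; \hat{G}_i^\top M E \;-\; E^\top M \hat{G}_i \;+\; E^\top M E.
\]
By hypothesis, $\hat{G}_i^\top M \hat{G}_i - M \prec -\epsilon_i I$, and standard submultiplicativity gives the operator-norm bounds $\Vert \hat{G}_i^\top M E + E^\top M \hat{G}_i\Vert \leq 2\bar{G}\Vert M\Vert (1+L_u)\varepsilon_J$ and $\Vert E^\top M E\Vert \leq \Vert M\Vert (1+L_u)^2 \varepsilon_J^2$. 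Adding these and using $A \preceq \Vert A\Vert I$ on the perturbation terms, the chosen $\epsilon_i = 2\bar{G}\Vert M\Vert(1+L_u)\varepsilon_J + \Vert M\Vert(1+L_u)^2\varepsilon_J^2$ is precisely large enough so that $G(x,\theta)^\top M G(x,\theta) - M \prec 0$ for every $x$. Invoking Theorem~\ref{thm:true_system_stability} then delivers asymptotic stability around $S_e$.

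I do not expect a real obstacle here; the work is essentially a one-shot Weyl/submultiplicativity calculation plus an invocation of the previous theorem. The one subtlety worth writing carefully is that the bound on $\Vert E\Vert$ must be the operator norm of the \emph{composed} Jacobian error, not of each block separately, which is why the factor $(1+L_u)$ appears (one from the state-Jacobian block, $L_u$ from the input-Jacobian block multiplied by $\partial u/\partial x$). A minor bookkeeping point is that the constraint~\eqref{eq:stability_LMI_Kra} is required to hold for all $x$, but since the Jacobian error bound is assumed to hold uniformly in $x$ as well, the perturbation argument passes through pointwise with no modification.
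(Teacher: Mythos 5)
Your proposal is correct and follows essentially the same route as the paper's own proof: reduce to Theorem~\ref{thm:true_system_stability}, expand $G^\top M G$ around $\hat G_i^\top M \hat G_i$ with the Jacobian error term $\Delta G = \hat G_i - G$ bounded by $(1+L_u)\varepsilon_J$, and absorb the cross and quadratic perturbation terms into the margin $\epsilon_i$. Your writeup is in fact slightly more careful than the paper's, since you make explicit the bound $\Vert \hat G_i\Vert \leq (1+L_u)(L_F+\varepsilon_J) = \bar G$ that justifies the cross-term estimate, and you get the signs in the expansion right where the paper's displayed identity has a (harmless) sign slip.
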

\begin{proof}
By Theorem~\ref{thm:true_system_stability}, we only need to show the following: (here we drop the dependence on $\theta$ as it is fixed throughout the proof)
\begin{align}
    G(x)^\top M G(x) - M \prec 0, \forall x. \label{eq:stab_error:condition}
\end{align}
Let $\Delta G_i \coloneqq \hat{G}_i(x) \!-\! G_i(x)$. We first bound $\Vert \Delta G_i \Vert$:
\begin{align}
    \Vert \Delta G_i\Vert &= \left\Vert \widehat{ \frac{\partial F_i(\phi)}{\partial x}}  \!-\! \frac{\partial F_i(\phi)}{\partial x} \!+\! \Big(\widehat{\frac{\partial F_i(\phi)}{\partial u}}  \!-\! \frac{\partial F_i(\phi)}{\partial u}\Big) \frac{\partial u}{\partial x}\right\Vert \nonumber\\ &\leq (1 + L_u) \varepsilon_J. \label{eq:upperboundondiff} 
\end{align}
Next, note that the stability constraint~\eqref{eq:stability_LMI_Kra} indicates
\[ \hat G_i(x)^\top M \hat G_i(x) - M \prec -\epsilon_i I, \forall x.\]
Using this, we get 
\begin{align*}
    G_i(x)^\top M G_i(x) &= \hat G_i(x)^\top M \hat G_i(x) + \Delta G_i^\top M \hat{G}(x) + \hat{G}_i(x)^\top M \Delta G_i + \Delta G_i^\top M \Delta G_i\\
    &\preceq M - \epsilon_i I+ 2 \bar{G} \Vert M\Vert \Vert \Delta G_i\Vert I + \Vert M\Vert \Vert \Delta G_i\Vert^2 I\\
    &\prec M,
\end{align*}
where in the final step, we use \eqref{eq:upperboundondiff} and the choice of $\epsilon_i$. This verifies \eqref{eq:stab_error:condition} and gives the advertised result. 
\end{proof}

Next, we turn our attention to the learning guarantees of \alg. In particular, we need to guarantee that the model estimation errors are small enough at the end of first epoch such that the controller obtained via solving \eqref{eq:policy_opt_stab} would stabilize the system. Recall that RFF representation translates the nonlinear system in \eqref{eq:sys_vectorfield} into a linear system of the form $W_*^\top z(\phi_t)$, where $z(\phi)$ is the $D$-dimensional RFF representation of $\phi$. Thus, using standard least-squares estimation error results for the solution of \eqref{eq:least-squares}, in particular Theorem 1 and 2 of \cite{lale2021model}, and under Assumption \ref{asm:learning}, for large enough $D$, we get 
\begin{equation} \label{eq:rff_finite}
    \sup_{\| \phi\|\leq \Gamma_\phi} \|F(\phi) - \hat{F}_1(\phi)\| = \tilde{\mathcal{O}}(1/\sqrt{D}+\sqrt{D/\tau}),
\end{equation}
after $\tau$ time-steps, \textit{i.e.}, at the end of first epoch of \alg.

Using \eqref{eq:rff_finite}, we derive the following novel finite sample approximation error guarantee on the Jacobian of the underlying function $F(\cdot)$ via the finite difference method for the computation of partial derivatives. 
\begin{proposition}[Approximation Error of Jacobian using RFF] \label{thm:RFF_jacobian_finite}
Let $J_F$ denote the Jacobian of the underlying system $F$ given in \eqref{eq:sys_vectorfield}. Consider the finite difference approximation of $J_F$ using $\hat{F}_1(\cdot) = \hat{W}_1^\top z(\cdot)$, i.e., the estimated system dynamics at the end of the first epoch, such that 
\begin{equation}
\hat{J}_F^{(i,j)}(\phi) = \frac{\hat{F}_{1,i}(\phi+\varepsilon~ \mathbf{e_j}) - \hat{F}_{1,i}(\phi-\varepsilon~ \mathbf{e_j})}{2\varepsilon}, \label{eq:jac_F_estimate}
\end{equation}
where $\varepsilon>0$, $\hat{F}_{1,i}(\cdot)$ is the mapping from input to the $i$th index of the output of $\hat{F}_1$ and $\mathbf{e_j}$ is the $j$th standard basis. Under Assumptions \ref{asm:learning} \& \ref{asm:regularity}, for the optimal choice of $\varepsilon=\mathcal{\tilde{O}}\left((1/\sqrt{D}+\sqrt{D/\tau})^{1/3}\right)$, we have that $\sup_{\|\phi\| \leq B} \|\hat{J}_F(\phi) - J_F(\phi) \|_F = \mathcal{\tilde{O}}\left(\varepsilon^{2}\right).$
\end{proposition}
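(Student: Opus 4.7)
The plan is to decompose the entry-wise Jacobian estimation error into a pure finite-difference discretization bias involving only the true $F$ and a ``noise-amplification'' piece carrying the model error, then balance the two by choosing $\varepsilon$ optimally.

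\textbf{Step 1: Error decomposition.} For each index pair $(i,j)$ I would add and subtract the central difference of the true $F_i$,
\begin{align*}
\hat{J}_F^{(i,j)}(\phi) - J_F^{(i,j)}(\phi)
&= \underbrace{\frac{(\hat{F}_{1,i}-F_i)(\phi+\varepsilon \mathbf{e_j}) - (\hat{F}_{1,i}-F_i)(\phi-\varepsilon \mathbf{e_j})}{2\varepsilon}}_{A_{ij}(\phi)} \\
&\quad + \underbrace{\frac{F_i(\phi+\varepsilon \mathbf{e_j}) - F_i(\phi-\varepsilon \mathbf{e_j})}{2\varepsilon} - \frac{\partial F_i(\phi)}{\partial \phi_j}}_{B_{ij}(\phi)},
\end{align*}
so the two sources of error are cleanly separated.

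\textbf{Step 2: Model-error term.} By \eqref{eq:rff_finite}, the first-epoch model satisfies $\sup_{\|\phi'\|\leq \Gamma_\phi}|F_i(\phi')-\hat{F}_{1,i}(\phi')| = \tilde{\mathcal{O}}(1/\sqrt{D}+\sqrt{D/\tau})$. Since $\varepsilon$ is chosen polynomially small, for $\|\phi\|\leq B$ the two evaluation points $\phi \pm \varepsilon\mathbf{e_j}$ lie in the ball of radius $\Gamma_\phi$ where this bound applies; the triangle inequality then yields
$$|A_{ij}(\phi)| \le \tilde{\mathcal{O}}\!\left(\frac{1/\sqrt{D}+\sqrt{D/\tau}}{\varepsilon}\right).$$

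\textbf{Step 3: Discretization bias.} Taylor-expanding $F_i$ along the segment $\phi \pm t\varepsilon\mathbf{e_j}$, the linear term reproduces $\partial F_i/\partial \phi_j$ and the quadratic (Hessian) term cancels by the $\pm\varepsilon$ symmetry of the central difference. The first surviving contribution is the cubic remainder, which is $\mathcal{O}(\varepsilon^2)$ provided the third directional derivative of $F_i$ is bounded on $\{\|\phi\|\leq \Gamma_\phi\}$. Such a bound is available because each $F_i$ lies in the RKHS of infinitely smooth functions assumed in Section~\ref{sec:preliminary} (e.g.\ Gaussian RKHS), which produces uniformly bounded derivatives of all orders on any compact set. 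Hence $|B_{ij}(\phi)| \le C_3\,\varepsilon^2$ for some constant $C_3$ depending on $F$ and $\Gamma_\phi$.

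\textbf{Step 4: Balancing and Frobenius aggregation.} Combining Steps 2 and 3 yields, entry-wise,
$$\bigl|\hat{J}_F^{(i,j)}(\phi) - J_F^{(i,j)}(\phi)\bigr| \le \tilde{\mathcal{O}}\!\left(\varepsilon^2 + \frac{1/\sqrt{D}+\sqrt{D/\tau}}{\varepsilon}\right).$$
Setting the derivative in $\varepsilon$ to zero gives the optimum $\varepsilon^\star = \tilde{\mathcal{O}}\!\left((1/\sqrt{D}+\sqrt{D/\tau})^{1/3}\right)$, at which each entry error is $\tilde{\mathcal{O}}(\varepsilon^2)$. Because $J_F$ has a fixed number of entries ($n\times(n+p)$), the Frobenius norm differs from the entrywise maximum only by a dimensional constant that is absorbed into $\tilde{\mathcal{O}}(\cdot)$, yielding the claimed uniform bound on $\|\phi\|\le B$.

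\textbf{Main obstacle.} The routine part is clean; the delicate point is Step 3, because Assumption~\ref{asm:regularity} only bounds the Hessian of $F_i$. A naive second-order Taylor argument would thus yield only an $\mathcal{O}(\varepsilon)$ bias and the optimal $\varepsilon$ would scale like $\delta^{1/2}$ rather than $\delta^{1/3}$. Recovering the claimed $\varepsilon^2$ rate requires explicitly invoking the stronger RKHS smoothness hypothesis to bound the third derivative of $F_i$ on a compact set, and quantifying that bound in terms of the RKHS norm and the kernel parameters is where most of the technical care would go.
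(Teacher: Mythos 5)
Your proof follows essentially the same route as the paper's: a symmetric central-difference Taylor expansion for the discretization bias, the uniform model-error bound \eqref{eq:rff_finite} divided by $2\varepsilon$ for the noise-amplification term, and balancing the two at $\varepsilon = \tilde{\mathcal{O}}\bigl((1/\sqrt{D}+\sqrt{D/\tau})^{1/3}\bigr)$. The obstacle you flag in Step 3 is real but is equally present in the paper's own argument: there the second-order Taylor terms are cancelled by symmetry and the surviving $\mathcal{O}(\varepsilon^2)$ remainder is attributed to the Hessian bound $\mathbf{F_H}$ from Assumption \ref{asm:regularity}, even though that remainder is genuinely a third-derivative term, so your suggestion to invoke the RKHS smoothness to bound the third directional derivative on the compact set makes the step more rigorous than the paper's version rather than revealing a gap in your own.
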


\begin{proof}
Consider the following
\begin{align*}
    F_i(\phi+\varepsilon~ \mathbf{e_j}) &= F_i(\phi) + \varepsilon \frac{\partial F_i }{\partial \phi_j} + \varepsilon^2 \frac{\partial^2 F_i }{\partial \phi_j^2} + \mathcal{O}(\varepsilon^3) \\ 
    F_i(\phi-\varepsilon~ \mathbf{e_j}) &= F_i(x) - \varepsilon \frac{\partial F_i }{\partial \phi_j} + \varepsilon^2 \frac{\partial^2 F_i }{\partial \phi_j^2} - \mathcal{O}(\varepsilon^3).
\end{align*}
From Assumption \ref{asm:regularity}, we have
\begin{equation}
    \frac{F_i(\phi+\varepsilon~ \mathbf{e_j}) -  F_i(\phi-\varepsilon~ \mathbf{e_j})}{2\varepsilon} = \frac{\partial F_i }{\partial \phi_j}  + \mathbf{F_H} \mathcal{O}(\varepsilon^2). \label{eq:true_difference_F}
\end{equation}
Now consider \eqref{eq:jac_F_estimate}. Let $\delta_\tau = \tilde{\mathcal{O}}(1/\sqrt{D}+\sqrt{D/\tau})$. From \eqref{eq:rff_finite}, we have $\hat{F}_{1,i}(\phi+\varepsilon~ \mathbf{e_j})   = F_i(\phi+\varepsilon~ \mathbf{e_j}) + \epsilon_1$ and $\hat{F}_{1,i}(\phi-\varepsilon~ \mathbf{e_j})   = F_i(\phi-\varepsilon~ \mathbf{e_j}) + \epsilon_2$ for $0\leq \epsilon_1, \epsilon_2 \leq \delta_\tau$. Combining this with \eqref{eq:true_difference_F}, we obtain 
\begin{align*}
    \hat{J}_F^{(i,j)}(\phi) -  \frac{\partial F_i }{\partial \phi_j} &= \frac{\hat{F}_{1,i}(\phi+\varepsilon~ \mathbf{e_j}) - \hat{F}_{1,i}(\phi-\varepsilon~ \mathbf{e_j}) }{2\varepsilon} -  \frac{\partial F_i }{\partial \phi_j} \\ &= \frac{F_i(\phi+\varepsilon~ \mathbf{e_j}) + \epsilon_1 - F_i(\phi-\varepsilon~ \mathbf{e_j}) - \epsilon_2}{2\varepsilon} -  \frac{\partial F_i }{\partial \phi_j}\\
    &= \mathbf{F_H} \mathcal{O}(\varepsilon^2) + \frac{\epsilon_1 - \epsilon_2}{2\varepsilon}.
\end{align*}
This gives us that $|\hat{J}_F^{(i,j)}(\phi) -  \frac{\partial F_i }{\partial \phi_j}| \leq \mathbf{F_H}\mathcal{O}(\varepsilon^2) + \delta_\tau /\varepsilon $ for all $i,j$. Combining these yields 
\begin{equation*}
    \| \hat{J}_F(\phi) - J_F(\phi) \|_F \leq n\left(c\mathbf{F_H}\varepsilon^2 +  \frac{\delta_\tau}{\varepsilon} \right),
\end{equation*}
for some constant $c$. Note that the optimal choice of $\varepsilon$ is $\varepsilon=\mathcal{\tilde{O}}\left((1/\sqrt{D}+\sqrt{D/\tau})^{1/3}\right)$, which gives 
\[
\| \hat{J}_F(\phi) - J_F(\phi) \|_F = \mathcal{\tilde{O}}\left((1/\sqrt{D}+\sqrt{D/\tau})^{2/3}\right).
\]
\end{proof}
This result shows that the Jacobian of a vector-valued function in a known RKHS is well-approximated using the RFF representation of the function with finite samples, which can be of independent interest. Before stating our sample complexity result for stabilizing the system dynamics via \alg, we first show that the convergence of primal-dual method in \eqref{eq:primal_dual} guarantees the satisfaction of stability condition \eqref{eq:stability_LMI_Kra} for all points with proper choice of $\epsilon_{pd}$ and $\epsilon_i$.

\begin{theorem}[Primal-Dual Convergence Guarantees Stability] \label{thm:primal_dual_guarantee}
Suppose the primal-dual procedure converges, then the stability condition will be met for all samples in the batch of representative points $\mathcal{B}$ given in \eqref{eq:primal_dual}. Further, let $\mathcal{X}$ denote the bounded compact state space of the system for $\|\phi\|\leq\Gamma_\phi$ and suppose the batch $\mathcal{B} = \{x_i\}_{i=1}^{N}$ contains a finite set of points in $\mathcal{X}$ such that, $\forall x \in \mathcal{X}\,, \exists x_i \in \mathcal{B}\,, ||x- x_i|| < h$, for some $h>0$. Under the conditions of Theorem \ref{thm:estimated_system_stability}, for $||\frac{\partial \hat{G} (x, \theta)}{\partial x}|| \leq M_{G}$, if \alg sets $\epsilon_{\mathrm{pd}} = 2\bar{G}||M|| M_G h  $ in Eq~\eqref{eq:primal_dual}, then the stability condition is met on the entire state space $\mathcal{X}$ by choosing $\epsilon_i= 2\bar{G} \Vert M\Vert (1 + L_u) \varepsilon_J + \Vert M\Vert  (1 + L_u)^2 \varepsilon_J^2 $ in the constraint \eqref{eq:stability_LMI_Kra}.
\end{theorem}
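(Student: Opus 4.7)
My plan is to handle the two claims of the theorem separately: (i) primal-dual convergence forces the LMI to hold at every batch point with slack $\epsilon_{\mathrm{pd}}$, and (ii) this discrete guarantee can be lifted to all of $\mathcal{X}$ via a Lipschitz/covering argument, at which point Theorem~\ref{thm:estimated_system_stability} delivers stability.

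\textbf{Step 1 (batch satisfaction via dual convergence).} I will look at the dual update
\[
\mu \leftarrow \max\bigl(0,\ \mu + \eta_2 \sup_{i\in\mathcal{B}} \lambda_{\max}(K(x_i,\theta)+\epsilon_{\mathrm{pd}} I)\bigr).
\]
If the iterates converge, then in particular $\mu$ is stationary. If $\sup_{i\in\mathcal{B}}\lambda_{\max}(K(x_i,\theta)+\epsilon_{\mathrm{pd}} I) > 0$, then $\mu$ would strictly increase at every step, contradicting convergence. Hence at the limit
\[
\sup_{i\in\mathcal{B}} \lambda_{\max}\bigl(K(x_i,\theta)+\epsilon_{\mathrm{pd}} I\bigr) \le 0,
\]
which is equivalent to $K(x_i,\theta) \preceq -\epsilon_{\mathrm{pd}} I$ for every $x_i\in\mathcal{B}$. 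This is the first claim of the theorem and is essentially a standard primal-dual/complementary-slackness observation.

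\textbf{Step 2 (covering argument to the whole state space).} Fix an arbitrary $x\in\mathcal{X}$ and pick, using the assumed density of $\mathcal{B}$, an $x_i\in\mathcal{B}$ with $\|x-x_i\|<h$. I want to bound $\|K(x,\theta) - K(x_i,\theta)\|$, where the only $x$-dependent piece is $\hat G(\cdot,\theta)^\top M \hat G(\cdot,\theta)$. Writing
\[
\hat G(x)^\top M \hat G(x) - \hat G(x_i)^\top M \hat G(x_i) = \bigl(\hat G(x)-\hat G(x_i)\bigr)^\top M \hat G(x) + \hat G(x_i)^\top M \bigl(\hat G(x)-\hat G(x_i)\bigr),
\]
then using the Lipschitz bound $\|\hat G(x)-\hat G(x_i)\|\le M_G h$ and the uniform bound $\|\hat G\|\le \bar G$ (inherited from Theorem~\ref{thm:estimated_system_stability}), I obtain
\[
\bigl\|K(x,\theta) - K(x_i,\theta)\bigr\| \le 2\bar G \|M\| M_G h = \epsilon_{\mathrm{pd}}.
\]
Combining this with Step 1 via the operator-norm/eigenvalue inequality $\lambda_{\max}(A+B)\le \lambda_{\max}(A)+\|B\|$ yields
\[
\lambda_{\max}(K(x,\theta)) \le \lambda_{\max}(K(x_i,\theta)) + \epsilon_{\mathrm{pd}} \le -\epsilon_{\mathrm{pd}} + \epsilon_{\mathrm{pd}} = 0,
\]
so the constraint $\hat G(x,\theta)^\top M \hat G(x,\theta) - M \preceq -\epsilon_i I$ holds for every $x\in\mathcal{X}$ with the specified $\epsilon_i$.

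\textbf{Step 3 (invoke Theorem~\ref{thm:estimated_system_stability}).} Since the choice of $\epsilon_i$ in the statement matches exactly the one required by Theorem~\ref{thm:estimated_system_stability}, and Step 2 has shown that \eqref{eq:stability_LMI_Kra} is satisfied uniformly in $x$ under the estimated Jacobians, the asymptotic stability of $x_{t+1}=f(x_t,g_\theta(x_t))$ around $S_e$ follows directly.

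\textbf{Main obstacle.} The conceptually delicate step is Step 2: the Lipschitz constant $M_G$ is on $\hat G$ itself, but the constraint function involves the quadratic form $\hat G^\top M \hat G$, so I must carefully use a telescoping/cross-term decomposition together with the a priori bound $\|\hat G\|\le \bar G$ to get a \emph{linear} (in $h$) perturbation estimate. Calibrating $\epsilon_{\mathrm{pd}}$ so that the covering slack exactly cancels the dual-convergence slack is where the specific constant $2\bar G\|M\|M_G h$ comes from; a looser decomposition would force a larger $\epsilon_{\mathrm{pd}}$ and break the chain to Theorem~\ref{thm:estimated_system_stability}.
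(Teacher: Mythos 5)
Your proposal is correct and follows essentially the same route as the paper's proof: dual-update stationarity forces $\lambda_{\max}(K(x_i,\theta)+\epsilon_{\mathrm{pd}}I)\le 0$ on the batch, the identical cross-term decomposition of $\hat G(x)^\top M\hat G(x)-\hat G(x_i)^\top M\hat G(x_i)$ gives the Lipschitz bound $2\bar G\|M\|M_G h=\epsilon_{\mathrm{pd}}$, and Theorem~\ref{thm:estimated_system_stability} closes the argument. If anything, your Step 1 (the explicit contradiction showing $\mu$ would increase without bound) and your use of $\lambda_{\max}(A+B)\le\lambda_{\max}(A)+\|B\|$ make the bookkeeping of the $\epsilon_{\mathrm{pd}}$ slack slightly more careful than the paper's own write-up.
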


\begin{remark}
Recall from Section \ref{sec:framework} that the batch $\mathcal{B}$ constitutes arbitrary points in state-space to estimate the supremum of the stability constraint and \textit{does not} correspond to data collected from the system. In particular, \alg only requires the evaluation in \eqref{eq:primal_dual} using the estimated system dynamics in these particularly chosen representative points in the state-space $\mathcal{X}$. Note that $h$ corresponds to the fill distance for the batch $\mathcal{B}$. This condition can be met by using $N = (\Gamma_\phi / h + 1 )^d$ samples in the batch $\mathcal{B}$. Furthermore, in the expense of computational burden, $N$ can be even picked larger which would in return reduce $h$ and consequently shrink $\epsilon_{pd}$ arbitrarily.
\end{remark}

\begin{proof}
Recall the definition of $K(x,\theta)\!=\! \hat{G}(x,\theta)^\top M \hat{G}(x,\theta) \!-\! M \!+\! \epsilon_i I$. The dual variable update in \eqref{eq:primal_dual} follows $\mu \leftarrow \max(0, \mu + \eta_2 \sup_{i\in \mathcal{B}}\lambda_{\max}(  K(x_i,\theta) + \epsilon_{\mathrm{pd}}  I )),$ and convergence of $(\mu,\theta)$ to $(\mu^*,\theta^*)$ implies $\forall i\in \mathcal{B}, \lambda_{\max}(  K(x_i,\theta^*) )) \leq 0$, that is the Krasovskii’s stability condition holds for all the samples in the batch. 

Since each batch $\mathcal{B}$ is drawn from the state space $\mathcal{X}$, as training time goes to infinity, the stability condition $\lambda_{\max}(  K(x_i,\theta^{*}) )) \leq 0$ also holds for all $x_i \in \mathcal{B}$. By the fill distance condition, \textit{i.e.}, $\forall x \in \mathcal{X}\,, \exists x_i \in \mathcal{B}\,, ||x- x_i|| < h$, (here we drop the dependence on $\theta^{*}$ as it is fixed throughout the proof) 
\begin{align}
    &\min_{x_i \in \mathcal{B}} ||K(x) \!-\! K(x_i)|| \!=\! \min_{x_i \in \mathcal{B}} ||\hat{G}(x)^\top M \hat{G}(x) - \hat{G}(x_i)^\top M \hat{G}(x_i)|| \nonumber\\
    & \! \leq \! \min_{x_i \in \mathcal{B}_{h}} \!\! ||(\hat{G}(x) \!-\! \hat{G}(x_i))^\top \! M \hat{G}(x)|| \!+\! ||\hat{G}(x_i)^\top M(\hat{G}(x) \!-\! \hat{G}(x_i))||  \nonumber \\
    & \leq 2\bar{G}||M|| M_G h 
\end{align}
Let $\epsilon_{pd} = 2\bar{G}||M|| M_G h $, if $\forall x_i \in \mathcal{B}\,, K(x_i,\theta^{*}) +\epsilon_{pd} I \prec  - \epsilon_i I$, then $K(x_i,\theta^{*}) \prec - \epsilon_i I$ holds for all $x$ in the entire state space $\mathcal{X}$. By Theorem~\ref{thm:estimated_system_stability}, stability holds for the true system, i.e., $G(x)^\top M G(x) - M  \prec 0$ for all $x \in \mathcal{X}$.
\end{proof}

After stating these key guarantees, we finally provide the finite-sample stabilization guarantee of \alg.

\begin{theorem}[Finite Sample Stabilization via \alg] \label{thm:sample_complexity} Suppose Assumptions \ref{asm:learning}-\ref{asm:regularity} hold and the batch $\mathcal{B}$ is informative enough that its fill distance $h$ satisfies $\bar{\epsilon} - \epsilon_{pd} > 0$, for $\epsilon_{pd} = 2\bar{G}||M|| M_G h $. Set $\epsilon_i = \bar{\epsilon} - \epsilon_{pd}$ in the constraint \eqref{eq:stability_LMI_Kra}. If \alg uses $ D = \mathcal{\tilde{O}} \left(\left( \frac{2\bar{G} \Vert M\Vert (1 + L_u)  + \Vert M\Vert  (1 + L_u)^2 }{\bar{\epsilon}-\epsilon_{pd}} \right)^3 \right)$ number of RFF in learning the system, after $D^2$ samples (time-steps), we have the trajectory of $x_{t+1} = f(x_t,g_\theta(x_t))$ is asymptotically stable around set $S_e = \{x:f(x,g(x)) = x\}$, i.e., the solution of \eqref{eq:policy_opt_stab} after $D^2$ samples from the system gives a stabilizing controller $g_{\theta}$ for the unknown nonlinear dynamical system.
\end{theorem}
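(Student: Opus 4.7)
The plan is to compose Proposition~\ref{thm:RFF_jacobian_finite}, Theorem~\ref{thm:estimated_system_stability}, and Theorem~\ref{thm:primal_dual_guarantee}, and then to solve for the smallest number of RFFs $D$ (equivalently, the fewest samples) at which all three guarantees fit together inside the true stability margin $\bar{\epsilon}$ from Assumption~\ref{asm:stable}. Each building block contributes one layer of error — RFF model error, finite-difference Jacobian error, and batch fill-distance error — and the proof is essentially a budget argument showing that these can all be absorbed into $\bar{\epsilon}$ as soon as $D$ satisfies the stated bound.

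\textbf{Step 1 (samples to Jacobian error).} After the first epoch of length $\tau$ under the exploratory policy $g_{\theta_0}$, Assumption~\ref{asm:learning} triggers the least-squares guarantee \eqref{eq:rff_finite}, so the uniform RFF model error is $\delta_\tau = \tilde{\mathcal{O}}(1/\sqrt{D} + \sqrt{D/\tau})$. Choosing the epoch length $\tau = D^2$ balances the two terms, yielding $\delta_\tau = \tilde{\mathcal{O}}(1/\sqrt{D})$. Feeding this into Proposition~\ref{thm:RFF_jacobian_finite} with the stated finite-difference step size gives a Jacobian estimation error $\varepsilon_J = \tilde{\mathcal{O}}(\delta_\tau^{2/3}) = \tilde{\mathcal{O}}(D^{-1/3})$.

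\textbf{Step 2 (Jacobian error to required slack).} Theorem~\ref{thm:estimated_system_stability} says that enforcing the LMI \eqref{eq:stability_LMI_Kra} over the whole state space with any slack at least $\epsilon_i^\star(\varepsilon_J) := 2\bar{G}\|M\|(1+L_u)\varepsilon_J + \|M\|(1+L_u)^2 \varepsilon_J^2$ certifies stability of the true closed loop, while Theorem~\ref{thm:primal_dual_guarantee} shows that primal-dual convergence on the batch $\mathcal{B}$ certifies the state-space LMI with slack $\epsilon_i$ as soon as the batch slack exceeds $\epsilon_i + \epsilon_{pd}$. Since \alg plugs $\epsilon_i = \bar{\epsilon} - \epsilon_{pd}$ into the constraint, what we need is the inequality $\bar{\epsilon} - \epsilon_{pd} \geq \epsilon_i^\star(\varepsilon_J)$.

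\textbf{Step 3 (feasibility and convergence).} By Assumption~\ref{asm:stable}, the true Jacobian admits some $(M,\theta)$ with $G^\top M G - M \prec -\bar{\epsilon} I$. The same perturbation algebra used in the proof of Theorem~\ref{thm:estimated_system_stability}, applied in the reverse direction, shows that replacing $G$ by the estimate $\hat{G}$ degrades this margin by at most $\epsilon_i^\star(\varepsilon_J)$, so the estimated LMI still holds for $(M,\theta)$ with slack $\bar{\epsilon} - \epsilon_i^\star(\varepsilon_J)$. Under the Step~2 inequality this is at least $\bar{\epsilon} - \epsilon_{pd}$, so \eqref{eq:policy_opt_stab} is strictly feasible; standard primal-dual convergence for this saddle-point problem \cite{nedic2009subgradient} then produces a limit $(\theta^\star,\mu^\star)$ satisfying the batch constraint. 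Theorem~\ref{thm:primal_dual_guarantee} lifts this to the entire state space, after which Theorem~\ref{thm:estimated_system_stability} delivers asymptotic stability of $x_{t+1}=f(x_t,g_{\theta^\star}(x_t))$ around $S_e$.

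\textbf{Step 4 (solving for $D$).} It remains to invert the Step~2 inequality. Plugging $\varepsilon_J = \tilde{\mathcal{O}}(D^{-1/3})$ and noting that for small $\varepsilon_J$ the linear term in $\epsilon_i^\star$ dominates the quadratic one, the inequality collapses to
\[ D \;\geq\; \tilde{\mathcal{O}}\!\left(\left(\frac{2\bar{G}\|M\|(1+L_u) + \|M\|(1+L_u)^2}{\bar{\epsilon}-\epsilon_{pd}}\right)^{3}\right), \]
which matches the displayed choice. Since the epoch length is $\tau = D^2$, the total sample requirement is $D^2$.

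The main obstacle I anticipate is the feasibility half of Step~3: \alg optimizes over $(\theta,\mu)$, and one must argue both that the perturbed feasible set is non-empty \emph{and} that the primal-dual iterates actually settle inside it. This requires carefully tracking how the Jacobian error $\varepsilon_J$ and the fill-distance slack $\epsilon_{pd}$ jointly eat into $\bar{\epsilon}$ — a budget that is tight precisely when $D$ is at its lower bound, which is what produces the cubic dependence on the modelling constants in the final complexity.
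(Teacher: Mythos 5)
Your proposal follows essentially the same route as the paper's proof: compose Proposition~\ref{thm:RFF_jacobian_finite}, Theorem~\ref{thm:estimated_system_stability}, and Theorem~\ref{thm:primal_dual_guarantee}, reduce to the budget inequality $\bar{\epsilon}-\epsilon_{pd} \geq 2\bar{G}\Vert M\Vert(1+L_u)\varepsilon_J + \Vert M\Vert(1+L_u)^2\varepsilon_J^2$ (dropping the quadratic term via $\varepsilon_J<1$), balance $\tau=D^2$ to get $\varepsilon_J=\mathcal{\tilde{O}}(D^{-1/3})$, and invert for $D$. Your Step~3 on strict feasibility of the perturbed constraint and convergence of the primal-dual iterates is a genuine addition — the paper's own proof simply cites Assumption~\ref{asm:stable} and does not carry out that accounting — and the concern you flag there is real, though it does not change the stated sample-complexity order.
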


\begin{proof}
Recall Assumption \ref{asm:stable} which states that the stability condition holds for the underlying system with $\bar{\epsilon}$ margin. Therefore, combining Theorem \ref{thm:estimated_system_stability} and Theorem \ref{thm:primal_dual_guarantee}, to guarantee the stabilization of the underlying system for the entire state-space, we require $\epsilon_i \leq \bar{\epsilon} - \epsilon_{pd} $, \textit{i.e.},
\begin{equation}
    \frac{\bar{\epsilon}-\epsilon_{pd}}{2\bar{G} \Vert M\Vert (1 + L_u)  + \Vert M\Vert  (1 + L_u)^2 } \geq \varepsilon_J,
\end{equation}
since $\varepsilon_J <1$. Note that this gives an upper bound on the error of Jacobian estimates to guarantee stabilization. From Proposition \ref{thm:RFF_jacobian_finite}, we also have that $\varepsilon_J = \mathcal{\tilde{O}}\left(\left(1/\sqrt{D}+\sqrt{D/\tau}\right)^{2/3}\right)$, since $\widehat{ \frac{\partial F(\phi)}{\partial x}}  - \frac{\partial F(\phi)}{\partial x}$ and $ \widehat{\frac{\partial F(\phi)}{\partial u}}  - \frac{\partial F(\phi)}{\partial u} $ are submatrices of $\hat{J}_F(\phi) - J_F(\phi)$. Furthermore, the optimal choice of $\tau$ and $D$ that minimizes this upped bound is $\tau = D^2$, which results that $\varepsilon_J = \mathcal{\tilde{O}}\left(D^{-1/3}\right)$ after $\tau = D^2$ samples. Therefore, for the choice of 
\begin{equation*}
   D = \mathcal{\tilde{O}} \left(\left( \frac{2\bar{G} \Vert M\Vert (1 + L_u)  + \Vert M\Vert  (1 + L_u)^2 }{\bar{\epsilon}-\epsilon_{pd}} \right)^3 \right)
\end{equation*}
number of RFF after $\tau = D^2$ time-steps \alg is guaranteed to stabilize the underlying system dynamics.
\end{proof}

This result shows that by setting the epoch length $\tau=D^2$, \alg guarantees the recovery of a stabilizing controller at the end of first epoch, i.e. $g_{\theta_1}$. Furthermore, the choice of $\epsilon_i$ in Theorem \ref{thm:sample_complexity} guarantees the recovery of stabilizing controllers for the subsequent epochs with the non-increasing behavior of the estimation errors.



\section{Conclusions and Future Work}
\label{sec:comclusion}
In this paper, we study the online control of unknown nonlinear dynamical systems. We propose a model-based RL framework, Krasovskii Constrainted RL (KCRL), that provides the first sample complexity result for stabilization of unknown nonlinear systems. KCRL iteratively learns the unknown model dynamics via RFF representation, and solves the policy optimization using the primal-dual approach, with a stability constraint generated by Krasovskii's construction of Lyapunov function. 
There are many interesting open questions that remain. 
For example, 
in practice the Lyapunov function design is a challenging task which requires heuristics and Krasovskii's method is only one way to construct Lyapunov functions. 
It is important to broaden the framework and consider how to incorporate other Lyapunov function construction techniques. In particular, Krasovskii's method is related to contraction analysis where the matrix $M$ is allowed to be time-dependent. This can be viewed as a generalization of the Krasovskii's method and has been widely used in robotic manipulation and locomotion tasks.

\bibliographystyle{unsrtnat} 
\bibliography{main}

\newpage
\appendix

\section{Proofs for Stability}\label{sec:stability_proofs}

In the Appendix, we show that the stability constraint of \alg, \eqref{eq:stability_LMI_Kra}, guarantees the stability of the underlying system using the true system dynamics, Theorem \ref{thm:true_system_stability}.

\begin{proposition}[Kowalewski's Mean Value Theorem \cite{kowalewski1895mittelwertsatz}]
\label{thm:MeanValueKow}
Let $x_1, \ldots, x_n$ be continuous functions in a variable $t\in [a,b]$. There exists real numbers $t_1, \ldots, t_n$ in $[a,b]$ and non-negative $\lambda_1, \ldots, \lambda_n$, with $\sum_{i}^n \lambda_i = b-a$, such that 
\begin{equation*}
    \int_a^b x_k(t) dt = \sum_{i=1}^n \lambda_i x_k(t_i)
\end{equation*}
for each $k=1,\ldots,n$.
\end{proposition}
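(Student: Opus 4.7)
The plan is to deduce this generalized mean value theorem from a combination of two standard facts: (i) the integral mean of a vector-valued continuous function on $[a,b]$ lies in the convex hull of its image, and (ii) a Carathéodory-type theorem which, for \emph{connected} subsets of $\mathbb{R}^n$, expresses every point of the convex hull as a convex combination of at most $n$ points (rather than the usual $n+1$).

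Concretely, I would first package the data into a single vector-valued map $\vec{x}:[a,b]\to\mathbb{R}^n$ with components $\vec{x}(t)=(x_1(t),\ldots,x_n(t))$. Because each $x_k$ is continuous, the image $C:=\vec{x}([a,b])$ is both compact (continuous image of a compact set) and connected (continuous image of a connected set), and the integral $\vec{v}:=\int_a^b \vec{x}(t)\,dt$ defines a point in $\mathbb{R}^n$ coordinatewise. I would then show that the average $\vec{m}:=\vec{v}/(b-a)$ lies in the closed convex hull $\operatorname{conv}(C)$ by a standard separating-hyperplane argument: if some affine functional $\alpha\cdot y \leq c$ held on $C$, then integrating the continuous scalar function $\alpha\cdot\vec{x}(t)\leq c$ over $[a,b]$ and dividing by $b-a$ would give $\alpha\cdot\vec{m}\leq c$, so no hyperplane can strictly separate $\vec{m}$ from $C$, which forces $\vec{m}\in\operatorname{conv}(C)$.

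The final step is to invoke a refinement of Carathéodory's theorem (often attributed to Fenchel--Bunt) stating that if $S\subset\mathbb{R}^n$ is connected, then every element of $\operatorname{conv}(S)$ can be written as a convex combination of at most $n$ elements of $S$. Since $C$ is connected, this yields coefficients $\mu_1,\ldots,\mu_n\geq 0$ with $\sum_i\mu_i=1$ and points $t_1,\ldots,t_n\in[a,b]$ such that $\vec{m}=\sum_{i=1}^n \mu_i\vec{x}(t_i)$. Reading this identity coordinatewise and scaling $\lambda_i:=(b-a)\mu_i$ produces the claimed representation
\begin{equation*}
\int_a^b x_k(t)\,dt \;=\; \sum_{i=1}^n \lambda_i\, x_k(t_i), \qquad k=1,\ldots,n,
\end{equation*}
with $\lambda_i\geq 0$ and $\sum_i\lambda_i=b-a$, and with the same $t_i$'s and $\lambda_i$'s working simultaneously for every $k$.

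The main obstacle is the sharpness of the bound on the number of sample points: a naive application of Carathéodory only produces $n+1$ pairs $(\lambda_i,t_i)$, and shaving this down to $n$ is precisely where the connectedness of the image curve $C$ must be used through the Fenchel--Bunt refinement. A secondary technical point is to justify exchanging the scalar integral with the affine functional in the separation step, but this is immediate from linearity of the Riemann integral applied componentwise, so the geometric step is really the only substantive one.
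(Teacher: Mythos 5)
Your proof is correct. Note that the paper itself gives no proof of this proposition at all—it is stated as a classical result and attributed to Kowalewski's 1895 paper—so there is no internal argument to compare against; your write-up supplies a justification the paper omits. The argument you give is the standard modern route and each step checks out: the image $C=\vec{x}([a,b])$ is compact and connected, so $\operatorname{conv}(C)$ is compact (hence closed) in $\mathbb{R}^n$, and the separating-hyperplane argument correctly places the average $\vec{m}=\frac{1}{b-a}\int_a^b\vec{x}(t)\,dt$ in $\operatorname{conv}(C)$, since any affine bound $\alpha\cdot\vec{x}(t)\le c$ survives averaging. You are also right that the Fenchel--Bunt refinement of Carath\'eodory (at most $n$ points suffice when the set has at most $n$ connected components) is precisely where the count drops from $n+1$ to $n$, and connectedness of the curve is what licenses it; the naive Carath\'eodory bound would not match the statement. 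Two trivial points worth a sentence if you write this up formally: if Fenchel--Bunt returns fewer than $n$ points you pad with $\lambda_i=0$ and arbitrary $t_i\in[a,b]$, which the non-negativity of the $\lambda_i$ permits; and the rescaling $\lambda_i=(b-a)\mu_i$ converts the convex-combination normalization $\sum_i\mu_i=1$ into the stated $\sum_i\lambda_i=b-a$ exactly as you say.
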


\subsection{Proof of Theorem \ref{thm:true_system_stability}}

\begin{proof}
The key technique that underpins the proposed framework is to use Krasovskii's method for Lyapunov function construction to certifty stability. Note that Krasovskii's method considers continuous time dynamical systems, \textit{i.e.}, $\frac{d}{dt} x(t) = h(x(t))$. For this system, Krasovskii's method considers the candidate Lyapunov function of $V_h(x) = h^\top(x) P h(x)$, for some positive definite matrix $P \succ 0$. We first convert this approach to discrete-time nonlinear dynamical systems. To this end, notice that $h(x) = x - f(x)$ mimics the continuous case. In particular, let $x_e$ be the fixed point for $f(x)$, i.e., $f(x_e) = x_e$, which makes $h(x_e) = 0$. Following the Krasovskii's method, this gives the candidate Lyapunov function of 
\begin{equation}
    V_f(x) = (x-f(x))^\top M (x-f(x))
\end{equation}
for the underlying system $f(x)$. Note that we are considering the non-autonomous systems $F(\phi)$, where $\phi = [x^\top, u^\top]^\top$ given the controller $u = g_\theta(x)$. Let $F_\theta(x)$ denote the closed-loop system dynamics obtained via controller $g_\theta(x)$, i.e. $F_\theta(x) = F(\phi)$ with $\phi = [x^\top, g_\theta(x)^\top]^\top $. Therefore, we consider the following Lyapunov function:
\begin{equation}\label{eq:Kra_func}
    V(x) = (x-F_\theta(x))^\top M (x-F_\theta(x)).
\end{equation}
Firstly, note that $V(x)\geq 0$, and $V(x) = 0$ if and only if $x\in S_e$ as $M$ is positive definite. Then, we consider
\begin{align}\label{eq:lya_diff}
    V(x_{t+1}) - V(x_t) &= (x_{t+1}-F_\theta(x_{t+1}))^\top M (x_{t+1}-F_\theta(x_{t+1})) - (x_t - F_\theta(x_{t}))^\top M (x_t-F_\theta(x_{t})).
\end{align} 

First, consider the following for $h(x)= x - F_\theta(x)$. One can write $h(x_{t+1})$ in terms of $h(x_t)$ as follows, 
\begin{equation*}
    h(x_{t+1}) = h(x_t) + \int_{0}^1 \frac{\partial h}{\partial x} (x_t + t(x_{t+1}- x_t)) (x_{t+1} - x_t) dt.
\end{equation*}

From Proposition \ref{thm:MeanValueKow}, we have 
\begin{equation*}
    h(x_{t+1}) = h(x_t) + J_h (x_{t+1}-x_t),
\end{equation*}
where $J_h = \sum_{i=1}^n \lambda_i \frac{\partial h}{\partial x}(x_t + k_i(x_{t+1} - x_t))$ for $k_i \in [0,1]$, $\lambda_i \geq 0$ for all $i$ and $\sum_i^n \lambda_i = 1$. Plugging this in $V(x_{t+1})$, we get 
\begin{align*}
  V(x_{t+1}) &= h(x_t)^\top M h(x_t) + 2(x_{t+1}-x_t)^\top J_h^\top M h(x_t) + (x_{t+1}-x_t)^\top J_h^\top M J_h (x_{t+1}-x_t)
\end{align*}

Note that $x_{t+1} - x_t = F_\theta(x_{t}) -x_t = -h(x_t)$. Therefore, we get 
\begin{align*}
     V(x_{t+1}) &= h(x_t)^\top M h(x_t) - 2h(x_t)^\top J_h^\top M h(x_t) + h(x_t)^\top J_h^\top M J_h h(x_t) \\
     &= h(x_t)^\top (I-J_h)^\top M (I-J_h) h(x_t)
\end{align*}

From the definition of $J_h$ and $h(x)= x - F_\theta(x)$, we have $J_h = I - J_{F_\theta}$, where $J_{F_\theta} = \sum_{i=1}^n \lambda_i \frac{\partial F_\theta}{\partial x}(x_t + k_i(x_{t+1} - x_t))$, where $k_i$ and $\lambda_i$ follow from the definition of $J_h$. Thus we get 
\begin{equation}
    V(x_{t+1}) = (x_t - F_\theta(x_t))^\top J_{F_\theta}^\top M J_{F_\theta} (x_t-F_\theta(x_t)).
\end{equation}

Plugging this in \eqref{eq:lya_diff} gives 
\begin{equation}
    V(x_{t+1}) - V(x_t) = (x_t - F_\theta(x_t))^\top (J_{F_\theta}^\top M J_{F_\theta} - M) (x_t-F_\theta(x_t)).
\end{equation}

For any $x\in \mathbb{R}^n$, we have 
\begin{align*}
    x^\top J_{F_\theta}^\top M J_{F_\theta} x &= \left \| M^{1/2} J_{F_\theta} x \right \|^2 \\
    &= \left \| \sum_{i=1}^n \lambda_i M^{1/2} \frac{\partial F_\theta}{\partial x} \bigg(x_t + k_i(x_{t+1} - x_t)\bigg) x \right \|^2 \\
    &\leq \sum_{i=1}^n \lambda_i \left \| M^{1/2} \frac{\partial F_\theta}{\partial x} \bigg(x_t + k_i(x_{t+1} - x_t)\bigg) x \right \|^2 \\
    &= \sum_{i=1}^n \lambda_i x^\top \frac{\partial F_\theta}{\partial x} \bigg(x_t + k_i(x_{t+1} - x_t)\bigg)^\top M \frac{\partial F_\theta}{\partial x} \bigg(x_t + k_i(x_{t+1} - x_t)\bigg) x,
\end{align*}
where the inequality is due to Jensen's inequality. Due to the constraint \eqref{eq:stability_LMI_Kra}, we have that $\frac{\partial F_\theta}{\partial x}(x)^\top M \frac{\partial F_\theta}{\partial x}(x)  \prec M -\epsilon I $ for all $x\in\mathbb{R}^n$. Thus, we have that  
\begin{align}
    J_{F_\theta}^\top M J_{F_\theta} - M \preceq \sum_{i}  \lambda_i \left[ \frac{\partial F_\theta}{\partial x} \bigg(x_t + k_i(x_{t+1} - x_t)\bigg)^\top M \frac{\partial F_\theta}{\partial x} \bigg(x_t + k_i(x_{t+1} - x_t)\bigg) - M \right]  \preceq -\epsilon I. \label{eq:stability_guarantee} 
\end{align}

This shows that the Lyapunov function is decreasing along the system trajectory, i.e. $ V(x_{t+1}) - V(x_t) = (x_t - F_\theta(x_t))^\top (J_{F_\theta}^\top M J_{F_\theta} - M) (x_t-F_\theta(x_t)) < 0$. 

Lastly, if a trajectory $x_t$ is such that $V(x_{t+1}) - V(x_t) = 0, \forall t\geq 0$, then we must have $F_\theta(x_t) = x_t$ for all $t$, i.e. $x_t\in S_e$ for all $t$. Therefore, by LaSalle's Invariance Principle, we must have $S_e$ is asymptotically stable. 

\end{proof}
\end{document}